\documentclass[10pt,twocolumn,letterpaper]{article}

\usepackage{3dv}
\usepackage{times}
\usepackage{epsfig}
\usepackage{graphicx}
\usepackage{amsmath}
\usepackage{amssymb}

\usepackage{mathrsfs} 
\usepackage{comment}
\usepackage{algpseudocode,algorithm,algorithmicx}

\usepackage{multirow}
\usepackage{amsthm}
\usepackage{makecell}
\newtheorem{theorem}{Proposition}

\def\bfX{{\bf X}} 
\def\bfY{{\bf Y}} 
\def\bfR{{\bf R}} 
\def\bft{{\bf t}} 
\def\bfE{{\bf E}} 
\def\bfy{{\bf y}} 
\def\bfx{{\bf x}} 
\def\bfp{{\bf p}} 
\def\bfT{{\bf T}} 
\def\bfC{{\bf C}} 
\def\bfc{{\bf c}} 
\def\bfF{{\bf F}} 
\def\bff{{\bf f}} 

\def\bfU{{\bf U}} 
\def\bfn{{\bf n}} 
\def\bfI{{\bf I}} 
\def\bfzero{{\bf 0}} 

\def\bfSigma{{\boldsymbol{Y}}} 
\newcommand{\norm}[1]{\left\lVert#1\right\rVert}

\DeclareMathOperator*{\argmin}{arg\,min}
\usepackage{pifont}

\usepackage{color}
\usepackage{enumitem}

\usepackage[pagebackref=true,breaklinks=true,letterpaper=true,colorlinks,bookmarks=false]{hyperref}

\usepackage[labelfont=bf,font={footnotesize}]{caption} 

\definecolor{darkorange}{rgb}{1.0, 0.55, 0.0}

\threedvfinalcopy 

\ifthreedvfinal\pagestyle{empty}\fi
\begin{document}

\title{Fast Simultaneous Gravitational Alignment of Multiple Point Sets} 

\author{Vladislav Golyanik \hspace{2.8em}  Soshi Shimada \hspace{2.8em} Christian Theobalt\vspace{10pt}\\ 
Max Planck Institute for Informatics, SIC}

\maketitle 
\begin{abstract}
   The problem of simultaneous rigid alignment of multiple unordered point sets which is unbiased towards any of the inputs has recently attracted increasing interest, and several reliable methods have been newly proposed. 
   While being remarkably robust towards noise and clustered outliers, current approaches require sophisticated initialisation schemes and do not scale well to large point sets.  
   This paper proposes a new resilient technique for simultaneous registration of multiple point sets by interpreting the latter as particle swarms rigidly moving in the mutually induced force fields. 
   Thanks to the improved simulation with altered physical laws and  acceleration of globally multiply-linked point interactions with a $2^D$-tree ($D$ is the space dimensionality), our Multi-Body Gravitational Approach (MBGA) is robust to noise and missing data while supporting 
   more massive 
   point sets than previous methods (with $10^5$ points and more). 
   In various experimental settings, MBGA is shown to outperform several baseline point set alignment approaches in terms of accuracy and runtime. 
   We make our source code available for the community to facilitate the reproducibility of the results\footnote{\url{http://gvv.mpi-inf.mpg.de/projects/MBGA/}}. 
\end{abstract}

\section{Introduction} 
Rigid point set alignment algorithms conventionally handle two point sets,  one of which is a fixed reference, and the other one is a template being  transformed \cite{Besl_McKay_1992}. 
This setting is a well-understood and -studied. 
With more than two point sets, selecting and fixing one point set as a reference will result in a biased solution towards the selected reference. 
A global or groupwise approach handling all point sets on par, \textit{i.e.,} enabling each point set to transform during the alignment \cite{Eggert96, EvangelidisHoraud2018, Lawin2018}, is a recognised alternative to various pairwise schemes requiring a subsequent consensus of transformations \cite{Bergevin1996, HuberHebert01}. 
Groupwise registration arises in different domains of 3D vision and engineering, including 3D reconstruction, CAD modelling, autonomous robot navigation and simultaneous localisation and mapping, among others. 
While remarkable progress has been achieved in groupwise multi-point set alignment, handling of large point sets (with $10^5$ points and more) remains challenging. 
Subsampling of large point sets can result in loss of high-frequency details and low alignment accuracy. 
Hence, our goal is a method which is robust to noise and which can deal with a large number of points with no subsampling. 

\begin{figure}[t!] 
\centering 
\includegraphics[width=1.0\linewidth]{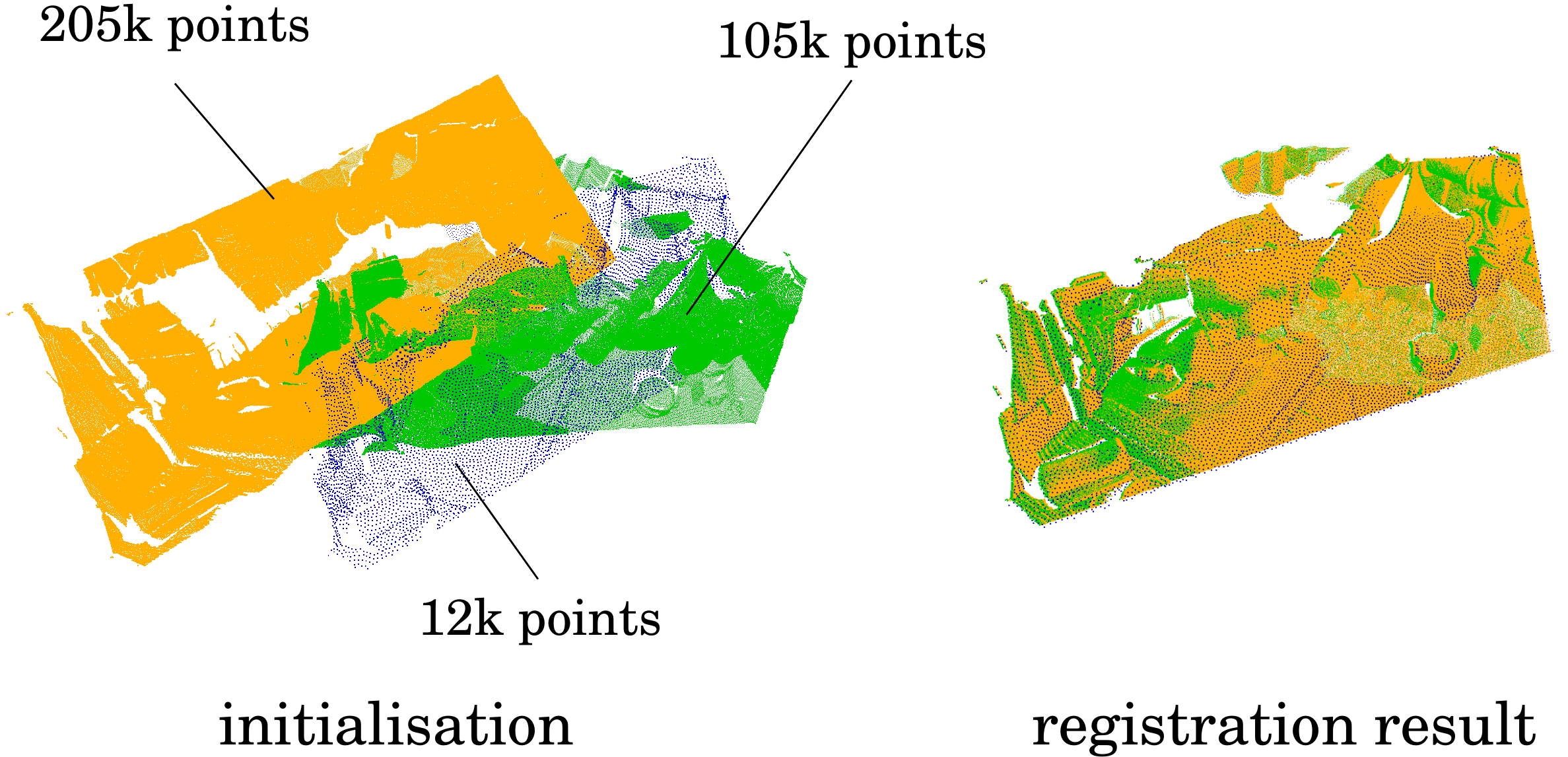} 
\caption{The proposed MBGA simultaneously registers multiple point sets of varying densities into a common reference frame. 
Thanks to the acceleration with a $2^D$-tree, this is the first approach for multi-point-set alignment which performs globally multiply-linked point interactions for large point sets in feasible time (in minutes instead of years). %
} 
\label{fig:multiple_densities} 
\end{figure}

The core variables for deriving the complexity of groupwise point set alignment approaches are the number of input point sets $L$, the average cardinality of the point sets $N$ and a model-specific factor $Q$ which reflects how many operations are expected per point of every point set. %
Thus, we arrive at $\mathcal{O}(Q N L)$ complexity in the general case. 
If point sets are multiply-linked, \textit{i.e.,} every point of one point set interacts with a region of another point set, or nearest-neighbour search is required, then $Q{\sim}N$. 
If data structures for acceleration such as $k$-d trees are used, the worst-case complexity for computing nearest neighbours is still linear (which is more and more probable with the growing point dimensionality  \cite{Friedman1977,Weber1988}) in $N$ implying the overall worst-case complexity to be $\mathcal{O}(Q N^2)$. 
Especially in probabilistic approaches, where all incoming point sets are assumed to be sampled from a single unknown mixture model, %
one can often achieve an accurate alignment with $Q < N$. 
However, there is no guarantee that the same $Q$ will work well for a broader range of inputs.  %
In some cases, setting $Q > N$ is necessary, and 
only $\mathcal{O}(Q N^2)$ can be conservatively assumed without any prior knowledge about the scene. 
\vspace{2mm}
\noindent\textbf{Contributions.} 
Our contributions in this work are: 
\vspace{-1mm}
\begin{itemize}[leftmargin=*,itemsep=1pt]
    \item We present Multi-Body Gravitational Approach (MBGA) --- \textit{the first particle dynamics based method for simultaneous alignment of multiple point sets}, see Fig.~\ref{fig:multiple_densities}. 
    It operates in \textit{the reference-free mode} and models point clouds as rigid particle swarms moving in mutually induced and alternating gravitational force fields (Sec.~\ref{ssec:MBGA}). 
    \item We accelerate the calculation of gravitational potentials acting between particles with a Barnes-Hut $2^D$-tree (quadtree in 2D, octree in 3D) \cite{BarnesHut86} operating on the union of point clouds, and \textit{guarantee the quasi-linear runtime complexity in the total number of points} $N L$ (Secs.~\ref{ssec:point_clustring} and \ref{ssec:computational_complexity}). 
    We show that this data structure can induce a \textit{new shape signature} relying on fitting a polynomial of degree three and considering the cubic surface component at every point (Sec.~\ref{ssec:shape_signature}). 
    The new shape signature helps in handling partial overlaps. 
    \item Experimental evaluation with \textit{state-of-the-art results among non-learning based methods}, in which MBGA outperforms several existing point set alignment approaches in terms of accuracy and runtime. 
    \textit{We empirically confirm the optimality criterion of locally minimal gravitational potential energy (GPE) generalises to more than two point sets handled on par, in a wide range of scenarios.} 
    MBGA aligns point set groups more accurately than 
    the recent pairwise gravitational method BHRGA \cite{BHRGA2019} and several variants of ICP  \cite{Besl_McKay_1992, Fitzgibbon01robustregistration}; the pairwise methods are tested in the \textit{sequential} and \textit{one-to-many} alignment modes. 
    We also align point set with significantly more points that the tested implementation of the groupwise JRMPC method \cite{EvangelidisHoraud2018} can support (Sec.~\ref{sec:experiments}). 
\end{itemize}

MBGA is a globally multiply-linked approach, \textit{i.e.,} \textit{every point of every point set is interacting with every point of every other point set in the system}. 
This is a strong property, considering that existing pairwise methods \textit{de facto} restrict point interactions to local neighbourhoods \cite{TsinKanade2004,  Myronenko2010}. 
Thus, the key strength of MBGA is the unification (for the first time, to the best of our belief) of two seemingly contradicting properties --- the global multiply-linked point interactions and the $\mathcal{O}(L N \log(L N))$ complexity. 

Moreover, MBGA has only a single parameter $\theta$ (the cell opening distance) and handles massive point sets on a mainstream CPU. 
It is easy to implement and contains ${\sim}400$ lines of C++ code. 
Different types of prior information (\textit{e.g.,} point colours, density and prior matches), if available, can be mapped to masses. 
MBGA can be used for completion of room scans obtained with a depth sensor. 
All in all, it offers a valuable alternative to current joint multi-point-set registration methods and extends the scope of available techniques to larger point sets. 

\section{Related Work}\label{sec:related_work} 

Early point set registration algorithms were motivated by emerging 3D scanners that produce partial point clouds that need to be aligned. 
The seminal \textit{iterative closest point} (ICP) algorithm for aligning two point sets~\cite{Besl_McKay_1992,Chen_Medioni_1992}
alternates between transformation estimation \cite{Horn87, Horn88} and local correspondence inference \cite{Elseberg12}. 
The heuristic local correspondence search of ICP makes it prone to erroneous local convergence if badly initialised, 
and sensitive to outliers. 
Different improvements were subsequently proposed for ICP, ranging from accelerating policies for nearest-neighbour search  \cite{GreenspanGodin2001, Nuechter2007} and  relaxation of one-to-one correspondences \cite{Gold97} to more efficient optimisation schemes~\cite{RusinkiewiczLevoy2001, Fitzgibbon2003}. 

The methods \cite{Bergevin1996,Eggert96} are among the first to address the groupwise case. 
Bergevin \textit{et al.}~minimise registration error between all point clouds in a view network using an extension of ICP assuming one point cloud to be fixed. 
Eggert \textit{et al.}~\cite{Eggert96} align point sets into a common, initially unknown, reference frame. 
The poses are updated iteratively by modelling spring forces between matching points, velocities and accelerations by approximate finite element analysis. 
The main drawback of these methods is the expensive correspondence search in every iteration and deterministic correspondence assignment, making them sensitive to noise and large initial misalignments. 
A recent and more robust extension of ICP called Motion Averaged ICP (MAICP) is based on motion averaging on a view graph \cite{GovinduPooja2014}. 

Another class of methods models source and target point clouds as probability density functions~\cite{Chui2000,Myronenko2010,gmmreg}, such as Gaussian Mixture Models (GMMs). 
They can implicitly model multiply-linked assignments and incorporate noise models~\cite{Cao:2018:RHT}. 
The methods probabilistically find a maximum-a-posteriori solution via an expectation-maximisation algorithm or minimise an explicit probability density distance. 
Wider convergence basin was obtained 
by Gaussian mixture decoupling~\cite{Eckart2015} or combining GMM representation and continuous domain mapping with a support vector machine~\cite{Campbell2015}. 
The multiply-linked Kernel Correlation (KC) approach minimises the Renyi's quadratic entropy of the joint system composed of the reference and the transformed template~\cite{TsinKanade2004}. 
In contrast to our method, only local point neighbourhoods are involved in multiply-linked interactions. %

Probabilistic multi-point-set alignment methods lead to remarkable  progress~\cite{EvangelidisHoraud2018, Danelljan2016, Lawin2018}. 
The JRMPC algorithm~\cite{EvangelidisHoraud2018} assumes that a collection of point sets is sampled from the same Gaussian Mixture Model %
and finds the most probable generative GMM \cite{EvangelidisHoraud2018}. 
Lately, Danelljan \textit{et al.}~extended JRMPC by incorporating colour information~\cite{Danelljan2016}, and Lawin \textit{et al.}~improved it for the case of spatially varying point densities~\cite{Lawin2018}. 
JRMPC can become impractical for point sets with $35{-}40k$ points and more. 
In contrast, our method can handle point sets with $10^5$ points and more. 
Deep learning methods for alignment of two point sets \cite{Aoki_2019_CVPR,Wang_2019_ICCV,Choy_2020_CVPR,Pais_2020_CVPR} achieve remarkable accuracy in scenarios with available training datasets, while the generalisability to arbitrary scenarios with different point set characteristics (\textit{e.g.,} varying density or volumetric sampling) is an open question. 
It is ongoing research on how these methods can be generalised to more than two point sets \cite{Gojcic_2020_CVPR}. 
We do not assume available training data, and, thus, can handle arbitrary scenes with moderate overlaps. 
Moreover, MBGA is a correspondence-free approach, \textit{i.e.,} it recovers an alignment in a single joint optimisation and does not require preliminary correspondence extraction. 
Recently, approaches relying on analogies to physical processes gained attention \cite{Deng2014, Golyanik2016, Jauer2019,  golyanik2020quantum}. %
Deng \textit{et al.}~cast point sets into the Schr\"{o}dinger distance transform representation, and align them by minimising a geodesic distance on a unit Hilbert sphere \cite{Deng2014}. 
Golyanik \textit{et al.}~\cite{Golyanik2016, BHRGA2019} and Jauer \textit{et al.}~\cite{Jauer2019} interpret point sets as rigid swarms of particles with masses moving in gravitational or electromagnetic force fields. 
The optimal alignment corresponds to the state of the minimum potential energy of the system. %
In \cite{golyanik2020quantum}, laws of quantum physics are used to align point sets which are encoded by magnetic fields acting on qubits. 
Our method most closely relates to \cite{Eggert96, Golyanik2016, BHRGA2019} but substantially differs from and improves over them in terms of performance and scalability for multiple point sets. 
Eggert \textit{et al.}~\cite{Eggert96} use Hooke's law and has quadratic worst-case complexity. 
It relies on nearest neighbour correspondence search. 
While the gravitational approach 
\cite{Golyanik2016} is globally-multiply linked, it supports only two point sets with a fixed reference and relies on solving second-order ordinary differential equations which can cause oscillations. 
Besides, these methods include velocity damping for convergence. 
Similarly to \cite{BHRGA2019}, we abstract away from the physically-accurate modelling and alter physical laws for the sake of faster convergence, stability, enhanced numerical properties and fewer parameters which need to be set. 
Both \cite{BHRGA2019} and our MBGA employ a Barnes-Hut tree for the accelerated computation of the potentials. 
We show that MBGA improves the speed and accuracy when handling multiple point sets, over a sequence of point set pair alignments performed by BHRGA \cite{BHRGA2019} and other widely-used methods \cite{Besl_McKay_1992, Fitzgibbon2003}. 
Since MBGA handles all samples on par, its performance is path-invariant and has the loop-closure effect. 

\section{Gravitational Alignment of Two Point Sets}\label{sec:previous_method} 

If a system of two point sets $\{\bfX, \bfY\}$ is given, whereby $\bfX$ is the fixed reference, the input point sets can be aligned by minimising the mutual \textit{gravitational potential energy} (GPE) $\bfU$ of the corresponding system of particles in the force field induced by $\bfX$, parametrised by translation $\bft$ and rotation $\bfR$: %
\begin{equation}\label{eq:main_GPE} 
    \bfU(\bfR, \bft) = -G \sum_{i, j} \frac{m_{\bfy_i} \, m_{\bfx_j}}{\norm{\bfR \, \bfy_i + \bft - \bfx_j}_2 + \epsilon }, 
\end{equation} 
where $m_{\bfy_i}$ and $m_{\bfx_j}$ denote masses, $[\bfy_i] = \bfY$, $[\bfx_j] = \bfX$, $G$ is the  gravitational constant and $\epsilon$ is a softening parameter for preventing near-field singularities. 
In \cite{Golyanik2016}, this energy is minimised implicitly by updating the forces $\vec{\bff_i}$ acting on particles $\bfy_i$, accelerations (using Newton's second law of motion $\vec{\bff_i} = m_{\bfy_i} \vec{a}_i$), velocities $v_i^{t+1}$ and individual point displacements $d_i^{t+1}$: 
\begin{equation}\label{eq:force_explicit} 
    \footnotesize
    \vec{\bff_i} = -G m_{\bfy_i} \sum_j m_{\bfx_j}  \big( \norm{\bfR \bfy_i + \bft - \bfx_j}^2 + \epsilon^2  \big)^{-3/2} \hat{\bfn}_{i,j} - \eta v_i^t, 
\end{equation}
\vspace{-10pt} 
\begin{equation}\label{eq:velocity_displacement_explicit} 
    v_i^{t+1} = v_i + \Delta t \, \frac{\vec{\bff_i}}{m_{\bfy_i}} \;\;\text{and}\;\; d_i^{t+1} = \Delta t \, v_i^{t+1}. 
\end{equation}
In \eqref{eq:force_explicit}, $\eta$ denotes a dissipation constant which determines the portion of the kinetic energy dragged from the system, per template's particle. 
In \eqref{eq:velocity_displacement_explicit}, $\Delta t$ is time or, generally, the forward integration step. 
In every iteration --- once updated --- the unconstrained displacements are added to the current positions. 
A rigid consensus transformation is found using Procrustes analysis \cite{Horn88} relating the previous and current poses. 
The method converges when the difference in GPE of two last successive system states is below some threshold, or the maximum number of iterations is reached. 
\section{Multi-Body Gravitational Approach}\label{ssec:MBGA} 

\subsection{Notations and Assumptions} 

Let $\bfSigma = \{\bfSigma_1, \bfSigma_2, \hdots, \bfSigma_L\}$ be the set of input point sets of dimensionality $D$. 
We call all point sets $\bfSigma_l$, $l \in \{1, \hdots, L \}$ templates, since no reference is selected and all $\bfSigma_l$ are transformed during the alignment. 
The objective of MBGA is to recover a set of rigid transformation parameters $\{\bfR_l, \bft_l\} \in \bfT_l$ aligning all $\bfSigma_l$ into a common reference frame $\mathscr{R}$ (which is not known in advance) and correspondences across all $\bfSigma_l$ for points which have valid matches, see Fig.~\ref{fig:multiple_densities}. 
$\bfR_l$ and $\bft_l$ denote rotation and translation of $\bfSigma_l$,  respectively. %
We denote a rigid transformation operator applying $\bfT_l$ on single or multiple points by $g(\bfT_l, \cdot)$. 
In MBGA, every $\bfSigma_l$ is interpreted as a rigidly transformed particle swarm, and every point induces a gravitational force field and moves in the external (relative to the point set it belongs to) force field. 
We assume that in the general case, point sets are of different cardinalities, \textit{i.e.,} $|\bfSigma_1| \ne |\bfSigma_2| \ne \hdots \ne  |\bfSigma_L|$. 
Particles in our system possess masses, and their movements are modelled following Newton's second law of motion. 
The mass of every point is condensed in an infinitesimal volume of space, and collisions are not possible (no particle merging or splitting are allowed)  \cite{aarseth_2003}. 
To align the point clouds, we need to find a system's configuration corresponding to the minimal GPE. 

\subsection{Our Gravitational Potential Energy Functional}\label{ssec:GPE_FUNCTIONAL} 

It has been shown in \cite{BHRGA2019} that the GPE \eqref{eq:main_GPE} can be improved with the negative elementwise reciprocal transform (NERT), resulting 
after a series of further simplifications in %
\begin{align}\label{eq:BHRAG_FINAL_GPE} 
  \bfE(\bfR, \bft) = \sum_i \sum_j m_{\bfy_i} \, m_{\bfx_j} \norm{\bfR \, \bfy_i + \bft - \bfx_j}_2. 
 \end{align} 
This transform enables a robust optimisation by non-linear least squares (NLLS) with a faster convergence while still preserving the notion of GPE (the higher the distance between the particles, the higher is the GPE). 
NERT makes multiple parameters of GA obsolete, and the remaining ones cover a much broader range of scenarios. 
Furthermore, core acceleration techniques applicable to the classical $\mathtt{n}$-body problem, such as Barnes-Hut octree \cite{BarnesHut86} or fast multipole method \cite{Greengard1987}, are also applicable to \eqref{eq:BHRAG_FINAL_GPE}. 
Thus, we are building upon the currently most accurate method of the gravitational class and propose to minimise the mutual GPE of the system with $L$ point sets $\bfSigma_l$, where the latter are constrained to move rigidly: 
\begin{equation}\label{eq:GPE_main} 
\begin{aligned} 
  \bfE(\bfT) = 
  & \sum_{l = 1}^{L} \,\, \sum_{i = 1}^{|\bfSigma_l|} \, \sum_{\substack{\bfp_j \in \\ \{\bfSigma \setminus \bfSigma_l\} } } 
  \Big( m_{\bfp_i}^l \, m_{\bfp_j} \norm{ g(\bfT_l, \bfp_i^l) - \bfp_j}_2 \Big), 
\end{aligned} 
\end{equation} 
with $\bfT = \{\bfT_l\} = \{\bfT_1, \bfT_2, \hdots, \bfT_L\}$, $\bfp_i^l \in \bfSigma_l$, $\bfp_j \in \{\bfSigma \setminus \bfSigma_l\}$,  rigid transformation operator $g(\cdot,\cdot)$, 
and masses $m_{\bfp_i}^l$ and 
$m_{\bfp_j}$ of $\bfp_i^l$ and $\bfp_j$, respectively. 
There are multiple changes in the GPE for the multi-body case compared to the case with two point sets. 
In \eqref{eq:GPE_main}, every $\bfSigma_l$ is influenced by a transformed  gravitational force field induced by all other point sets excluding  $\bfSigma_l$. 
For each $\bfSigma_l$, there is such $\bfT_l$ so that $\bfE(\bfT)$ is minimal. 
Similar to the two point sets case (see Sec.~\ref{sec:previous_method}), locally-optimal alignment is achieved when the GPE of the system is locally minimal. %

Eq.~\eqref{eq:GPE_main} is minimised iteratively. 
In every iteration, we alternatingly update $\bfT_l$ by changing the roles between the current transformed point set $\bfSigma_l$ and force inducing point sets $\{\bfSigma \setminus \bfSigma_l\}$, \textit{i.e.,} a union of the remaining templates. %
Compared to the pairwise case, there are also substantial differences in the way how the $2^D$-tree is used for acceleration of all particle interactions. 

\subsection{Point Clustering with a $\large{\boldsymbol{2^D}}$-Tree}\label{ssec:point_clustring}

We approximate globally multiply-linked interactions between $\bfSigma_l$ and $\{\bfSigma \setminus \bfSigma_l\}$ by grouping the points of the latter into clusters. 
Instead of exhaustively summing up interactions between every $\bfp_i^l$ and every $\bfp_j$, we cluster $\bfp_j$ --- so that every cluster is represented by its mass located at its centre of mass --- and compute a single interaction between $\bfp_i^l$ and the cluster. 
The further away from $\bfp_i^l$ are $\bfp_j$, the larger clusters can be built and more runtime can be saved, eventually leading to the number of evaluations for every template point $\bfp_i^l$ proportional to $ \log (|\{\bfSigma \setminus \bfSigma_l\}|)$. 

We cluster $\bfp_j$ with a $2^D$-tree, \textit{i.e.,} a quadtree in 2D and an octree in 3D. %
$2^D$-tree is a representation allowing for point agglomeration depending on the desired granularity level \cite{BarnesHut86}. %
Building a $2^D$-tree requires quasi-linear time, and extracting a clustered representation takes logarithmic time in the total number of points. 
The granularity level is determined by the depth at which the clusters are fetched. %
Note that with every level, the cell is partitioned into $2^D$ subcells, and all points falling into a cell at any level belong to that cell. 
The maximum density of the points determines the tree's depth because finally, each point occupies a leaf. %
For more details, the reader can refer to \cite{BarnesHut86} as we use the algorithm exactly as proposed in \cite{BarnesHut86}. 
The difference lies in the points belonging to the $2^D$-tree and how the tree is used. %
We build a $2^D$-tree on the combined set $\bfSigma$. 
To prevent inter-point-set interactions, the masses $m_i^l$ of all points $\bfp_i^l \in \bfSigma_l$ are set to zero, whereas while extracting a clustered representation from the tree for every $\bfp_i^l$, we restore its original non-zero mass. 
Cluster composition and extraction require a single parameter $\theta$. %
Among other things, it determines the amount of the nearest points which are not clustered during GPE calculation. 
Suppose $l$ is the length of the current examined cell, and $\mu$ is the distance from $\bfp_i^l$ to the centre of the cell. 
If $\frac{l}{\mu} < \theta^{-1}$, then the cell's influence on $\bfp_i^l$ is accumulative. 
Otherwise, the subcells are visited recursively until the condition is satisfied or the tree is entirely traversed. 
For every current point set $\bfSigma_l$ with the corresponding $\bfR_l$ and $\bft_l$ being updated, we denote the set of fetched clusters by $\bfC_l$. 
As a set of fetched clusters for every $\bfp_i^l \in \bfSigma_l$ is different, we use the specified notation $\bfC_{l, i}$ for every $\bfp_i^l$ when necessary. 
Thus, the acceleration with a $2^D$-tree alters the GPE \eqref{eq:GPE_main} as follows: 
\begin{equation}\label{eq:GPE_tree} 
\begin{aligned} 
  \bfE_{\bfC}(\bfT) = 
  & \sum_{l = 1}^{L} \,\, \sum_{i = 1}^{|\bfSigma_l|} \, \sum_{\bfc_j \in \bfC_{l,i} } 
  \Big( m_{\bfp_i}^l \, m_{\bfc_j} \norm{ g(\bfT_l, \bfp_i^l) - \bfc_j}_2 \Big), 
\end{aligned} 
\end{equation} 
with the fetched clusters $\bfc_j \in \bfC_{l, i}$ and cluster masses $m_{\bfc_j}$. 

\subsection{Defining Boundary Conditions through Masses}\label{ssec:boundary_conditions} 

In gravitational methods, including MBGA, mass distribution serves as an additional alignment cue. 
We propose a policy for the integration of prior matches into our approach. %
Let $N_c$ be the subset of points for which it is known beforehand that they match among each other across all $L$ inputs. 
By setting point masses in $N_c$ larger than the default masses, the penalty increases for the cases when the points with higher masses do not coincide. 
Thus, we assign to the points from $N_c$ a higher weight which is proportional to the confidence of prior matches (usually, two-three orders of magnitude higher compared to the default masses). 
Such GPE pre-conditioning improves the alignment accuracy under challenging scenarios with noise and missing data in cases when prior matches are available or can be obtained in a pre-processing step. %
Other cues such as colours or geometric descriptors (such as the one proposed in Sec.~\ref{ssec:shape_signature}) can be likewise mapped to masses. 

\subsection{Shape Signature Induced by a  $\large{\boldsymbol{2^D}}$-Tree}\label{ssec:shape_signature} 
Structure variability is the primary registration cue in point set alignment. 
In scenarios with partially-overlapping data and outliers, uniform mass initialisation can result in suboptimal registrations. 
We are encouraged to counterbalance increasing correspondence uncertainties with mass assignments relying on geometric features and propose a new shape signature induced by the surface fitting. %
For every $\bfp_i^l \in \bfSigma_l$, we fit a surface of degree three in its vicinity. 

Suppose $M_i^l$ is a local point neighbourhood of $\bfp_i^l$ and $K = |M_i^l|$, \textit{i.e.,} the number of points in $M_i^l$. 
We find the best approximation of $M_i^l$ by a third degree polynomial of two variables which in general form reads: 
\begin{equation}
\begin{aligned}
    & \;z = a_1 + a_2 x + a_3 y + a_4 x^2 + a_5 x y + \\ & a_6 y^2 + a_7 x^3 + a_8 x^2 y + a_9 x y^2 + a_{10} y^3,   
\end{aligned}
\end{equation}
with the shortcut notation $\bfp_i^l = (x, y, z)$, and the unknown coefficients $\boldsymbol{a} = \{a_1, \hdots, a_{10}\}$. %
Following the standard theory of surface approximation \cite{Nealen2004}, we take partial derivatives of the sum of residuals 
\begin{equation} 
    E = \sum_{i = 1}^{K} \big( r_i^l \big)^2 = \sum_{i = 1}^{K} \big( z_i - 
    (a_1 + a_2 x_i + \hdots + a_{10} y_i^3)  \big)^2 
\end{equation} 
and equate them to zero. As a result, the following linear system is obtained: %
\begin{equation}\label{eq:linear_system_shape_descriptor} 
\hspace{-10pt}
    \begin{bmatrix} 
        1       & x_1       & y_1       & x_1 y_1   & \hdots & y_1^3        \\ 
        1       & x_2       & y_2       & x_2 y_2   & \hdots & y_2^3        \\ 
        \vdots  & \vdots    & \vdots    & \vdots    & \ddots & \vdots       \\ 
        1       & x_K       & y_K       & x_K y_K   & \hdots & y_K^3        \\         
    \end{bmatrix}%
    \begin{bmatrix} 
        a_1 \\ a_2 \\ a_3 \\ \vdots \\ a_{10} 
    \end{bmatrix} = 
    \begin{bmatrix} 
        z_1 \\ z_2 \\ z_3 \\ \vdots \\ z_{K} 
    \end{bmatrix}. 
\end{equation} %
We estimate $\boldsymbol{a}$ in~\eqref{eq:linear_system_shape_descriptor} by solving the corresponding normal equations. 
Finally, we take the sum of the absolute values of $a_7$ and $a_{10}$ coefficients as a shape descriptor, \textit{i.e.,} the indicator of the cubic surface component. 
For highly curved surfaces, the cubic component is strong, and for more flat surfaces it either weak or vanished. 
During the mass initialisation, points with high descriptor values are assigned tenfold masses (this value is scenario-specific), whereas the remaining masses are initialised uniformly with smaller values. 
Our shape descriptor suppresses areas with probable outliers in flat regions as well as noise, and emphasises more descriptive geometric cues reflected in higher masses. 
For every $\bfp_i^l$, computing the shape  descriptor requires finding the local neighbourhood $M_i^l$. 
To find $M_i^l$, we use a $2^D$-tree described in Sec.~\ref{ssec:point_clustring}. 
First, we build a separate $2^D$-tree for the given $\bfSigma_l$. 
Next, we use the property that every point inserted into a $2^D$-tree corresponds to a leaf. 
For a given $\bfp_i^l$, we query the closest points (the leaves) from $\bfC_{l,i}$ by choosing a sufficiently large $\theta > 8.0$. 
Thus, the proposed light-weight shape descriptor executes in $\mathcal{O}(|\bfSigma_l| \, \log(|\bfSigma_l|))$ time. 
Its additional advantages are the continuity, \textit{i.e.,} the property that every point is assigned a shape signature  value unless thresholded (and not only sparse points which is the case with widespread shape descriptors for approximate alignment \cite{Rusu2008, Zhong2009, SipiranBustos2011}), and code reusability (the $2^D$-tree is used unaltered). 

\subsection{Computational and Memory Complexity}\label{ssec:computational_complexity}

\begin{theorem} 
   The computational complexity of MBGA is 
   $\mathcal{O}( L \bar{N} \log (L \bar{N}) )$, 
   where $L$ is the total number of point sets, and $\bar{N}$ is the average number of points in each point set. 
\end{theorem}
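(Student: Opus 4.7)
The plan is to decompose MBGA's workflow per iteration into three independent cost contributions --- tree construction, energy/force evaluation via Barnes-Hut traversal, and per-cloud rigid pose update --- bound each separately in terms of the total point count $N_{\text{tot}} = L\bar{N}$, and then show that the dominant term is $\mathcal{O}(N_{\text{tot}} \log N_{\text{tot}})$.

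First, I would analyse the $2^{D}$-tree construction on the union $\bfSigma$. Since $|\bfSigma| = L\bar{N}$, inserting all points into a $2^{D}$-tree with the Barnes--Hut scheme of \cite{BarnesHut86} costs $\mathcal{O}(L\bar{N}\,\log(L\bar{N}))$ under the standard balanced-insertion assumption; this is the same bound used in \cite{BHRGA2019} specialised to a single merged point set. The mass-toggling trick from Sec.~\ref{ssec:point_clustring} (setting $m_i^l=0$ while updating $\bfSigma_l$ and restoring the original mass at query time) is an $\mathcal{O}(1)$ bookkeeping operation per point, so it contributes at most $\mathcal{O}(L\bar{N})$ per iteration and does not affect the bound.

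Next I would bound the evaluation of $\bfE_{\bfC}(\bfT)$ from Eq.~\eqref{eq:GPE_tree}. For each of the $L$ choices of the current template $\bfSigma_l$, the outer loop visits its $|\bfSigma_l|\le\bar{N}$ points (up to a multiplicative constant), and for every $\bfp_i^l$ the fetched cluster set $\bfC_{l,i}$ is produced by a Barnes--Hut traversal with opening criterion $\frac{l}{\mu}<\theta^{-1}$. A standard result for Barnes--Hut acceptance (for fixed $\theta$ and point distributions of bounded local dimension) is that a single traversal terminates after $\mathcal{O}(\log(L\bar{N}))$ cell visits. Summed over $l$ and $i$, the total evaluation cost per iteration is
\begin{equation*}
\sum_{l=1}^{L}\sum_{i=1}^{|\bfSigma_l|}\mathcal{O}(\log(L\bar{N})) \;=\; \mathcal{O}\bigl(L\bar{N}\,\log(L\bar{N})\bigr).
\end{equation*}
The per-point-set Procrustes step used for the rigid consensus transformation (see Sec.~\ref{sec:previous_method}) is linear in the number of points and hence only $\mathcal{O}(L\bar{N})$. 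Adding the three contributions and absorbing lower-order terms yields the claimed bound; since the number of outer iterations is bounded by a constant $T_{\max}$ independent of $L$ and $\bar{N}$, the asymptotic form is preserved.

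The main obstacle is the $\mathcal{O}(\log(L\bar{N}))$ per-query claim for the Barnes--Hut traversal: in full generality this is only guaranteed in expectation or under mild regularity of the point distribution, and worst-case inputs (e.g.\ highly anisotropic clusters) can force traversals linear in $N_{\text{tot}}$. I would handle this either by appealing to the classical Barnes--Hut analysis of \cite{BarnesHut86} under a bounded-density assumption consistent with the inputs MBGA is designed for, or by stating the bound as quasi-linear expected complexity and noting that empirical runtimes reported in Sec.~\ref{sec:experiments} corroborate this regime. The remaining details --- merging the tree-building and traversal bounds, and verifying that the mass reassignment step is absorbed into lower-order terms --- are routine.
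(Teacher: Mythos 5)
Your proposal is correct and takes essentially the same route as the paper's proof, which likewise bounds the joint tree construction by $\mathcal{O}(L\bar{N}\log(L\bar{N}))$ (each insertion creates a leaf and updates the clusters along its path) and the per-iteration cluster fetching by $\mathcal{O}(L\bar{N}\log((L-1)\bar{N}))$ (one logarithmic traversal per point, summed over all $L$ point sets, with the iteration count treated as a small constant), then adds the two terms and simplifies. The only slips are cosmetic: MBGA's pose update is the Levenberg--Marquardt step of Sec.~\ref{ssec:GPE_minimisation} rather than the Procrustes step of Sec.~\ref{sec:previous_method} (its per-iteration cost is dominated by the already-counted residual evaluations, so your bound is unaffected), and your explicit caveat that the $\mathcal{O}(\log(L\bar{N}))$ per-query traversal holds only under distributional regularity is a point the paper leaves implicit rather than a divergence in method.
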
 
\begin{proof} 
We assume, without loss of generality, that every point set has $\bar{N}$ points on average. 
For $L \bar{N}$ points, a $2^D$-tree is built in $\mathcal{O}(L \bar{N} \log ( L \bar{N} ) )$ time, as each point insertion results in a leaf, and all  point clusters on the path to the leaf need to be updated. 

For computing potentials between the points, each template point requires a different set of clusters in every iteration. 
The complexity of fetching a set of clusters for one point in the two point sets case is $\mathcal{O}(\log \bar{N})$. 
For the two point set case, the clusters for $\bar{N}$ points in the joint $2^D$-tree are fetched in $\mathcal{O}(\bar{N} \log \bar{N})$ time. 
The complexity of fetching clusters for one point set, when the clusters are fetched from $L - 1$ point sets is $\mathcal{O}(\bar{N} \log( (L - 1) \bar{N}))$. 
In every iteration\footnote{MBGA converges within ${\sim}20{-}30$ iterations}, the clusters have to be fetched for every point set out of $L$, and we arrive at $\mathcal{O}( L \bar{N} \log( (L - 1) \bar{N}))$. 
Thus, the total complexity of MBGA is $\mathcal{O}( L \bar{N} \log (L \bar{N})  +  L \bar{N} \log( (L - 1) \bar{N}) )$ which further simplifies to $\mathcal{O}( L \bar{N} \log (L \bar{N}) )$. 
\end{proof}

\begin{theorem} 
The memory complexity of MBGA handling $L \bar{N}$ points is 
$\mathcal{O}(L \bar{N} \log (L \bar{N}))$, with the factor $\log (L \bar{N})$ 
attributable to  
the nodes in the $2^D$-tree \cite{BarnesHut86,BHRGA2019}. 
\end{theorem}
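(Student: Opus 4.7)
The plan is to decompose the memory footprint into three independent contributions and bound each separately, then observe that the $2^D$-tree dominates.

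First, I would account for the \emph{persistent input data}: storing the $L$ point sets (with per-point masses and the shape-signature weights from Sec.~\ref{ssec:shape_signature}) costs $\mathcal{O}(L \bar{N})$, and storing the $L$ rigid transformation parameters $\{\bfT_l\}$ costs $\mathcal{O}(L)$. Both are subsumed by the target bound.

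Second --- and this is the dominant term --- I would bound the memory used by the $2^D$-tree built on the union $\bfSigma$ of all point sets. By construction (Sec.~\ref{ssec:point_clustring}) each of the $L\bar{N}$ points terminates in a distinct leaf, and every internal node on the path from the root to a leaf stores an $\mathcal{O}(1)$-sized record (the centre of mass, aggregate mass, and at most $2^D$ child pointers). The standard Barnes--Hut analysis \cite{BarnesHut86} (also used in \cite{BHRGA2019} for the two-body case) bounds the total number of nodes by $\mathcal{O}(n \log n)$ for $n$ points under bounded spatial depth, giving $\mathcal{O}(L \bar{N} \log(L \bar{N}))$ for our tree. This step mirrors the construction-time argument in the proof of Proposition~1: each of the $L \bar{N}$ insertions touches a path of length $\mathcal{O}(\log(L \bar{N}))$, so the allocated nodes accumulate to the same bound.

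Third, I would handle the \emph{transient storage} required during the iterative minimisation of $\bfE_{\bfC}(\bfT)$. For each template point $\bfp_i^l$ the fetched cluster set $\bfC_{l,i}$ has size $\mathcal{O}(\log(L \bar{N}))$ (as used in Proposition~1). Since these sets may be processed point-by-point and released between evaluations, the transient requirement is at most $\mathcal{O}(L \bar{N} \log(L \bar{N}))$ even if all sets are materialised simultaneously, and strictly smaller otherwise. Summing the three contributions collapses to $\mathcal{O}(L \bar{N} \log(L \bar{N}))$, proving the claim.

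The main obstacle I anticipate is the justification of the $n \log n$ bound on the number of $2^D$-tree nodes: in truly pathological point configurations the depth can exceed $\log(L \bar{N})$, forcing long chains of internal nodes that separate nearly-coincident points. I would dispose of this by invoking the standard Barnes--Hut assumption of bounded spatial resolution (equivalently, bounded tree depth in $\log(L \bar{N})$), which is also what underlies the runtime bound already established in Proposition~1, so that the two propositions rest on the same hypothesis.
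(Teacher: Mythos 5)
Your proposal is correct and takes essentially the same approach as the paper: the paper gives no formal proof of this proposition, simply attributing the $\log(L\bar{N})$ factor to the $2^D$-tree nodes --- exactly your dominant second term --- while your anticipated obstacle (long chains of internal nodes separating near-coincident points) is precisely what the paper addresses in the text immediately following the statement, where the tree depth is restricted to $20$ to bound memory and filter out duplicated particles that would otherwise cause infinite splits. Your bounded-depth hypothesis is therefore the same assumption the paper makes implicitly, and your accounting of the input and transient-cluster terms merely makes explicit what the paper leaves unsaid.
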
 
In practice, we restrict the depth of the $2^D$-tree to $20$. 
In addition to providing the practical upper bound on the memory, this also filters out duplicated particles which would otherwise lead to infinite splits. 
This implies that even for point sets with cardinalities going  beyond tractable in this paper, 
the memory available on modern workstations will be sufficient for the proposed variant of the Barnes-Hut tree. 

\subsection{Gravitational Potential Energy Minimisation}\label{ssec:GPE_minimisation} 

MBGA finds a stable system state in a time-varying force field, \textit{i.e.,} the state with a minimal GPE. 
Every change in the poses of $\bfSigma_l$ causes changes both in the gravitational force field and $\bfC$ (cluster numbers and compositions). %
The alternating role of the transformed point set and force-inducing point set union  makes all $\bfT_l$ mutually dependent. 
Since we optimise $\bfT_l$ alternatingly in every iteration, we need to linearise in the vicinity of the current solution and iterate. 
The right side of \eqref{eq:GPE_tree} is a sum of residuals, and  
$\bfE_{\bfC}(\bfT)$ can be relaxed and minimised by iteratively reweighted non-linear least squares with a Huber-norm \cite{Huber1964}. 
We repeat the following transformation updates until convergence: %
\begin{equation}\label{eq:final_GPE_with_clusters} 
\small 
  \bfT^{t+1} = \bfT^t \oplus \argmin_{\bfT} \sum_{l, i, j} \, w_{i,j,l}(\bfT^{t}) \norm{ g(\bfT_l, \bfp_i^l) - \bfc_j }_\epsilon, 
  \vspace{-8pt}
\end{equation} 
with the weights $w_{i,j,l}(\bfT^{t}) = m_{\bfp_i}^l \, m_{\bfc_j}$, the robustness threshold $\epsilon \in [10^{-4}; 0.1]$ and $\oplus$ denoting the composition of rigid poses. 
After denoting the residuals by $r_{l, i, j}(\bfT) = w_{i,j,l} \norm{ g(\bfT_l, \bfp_i^l) -  \bfc_j}_2$ and collecting them into a single vector-valued function  $\bfF(\bfT) : \mathbb{R}^{6L} \to \mathbb{R}^{ \mathcal{O} ( \bar{N} \log(L \bar{N}) ) } = [r_{1}(\bfT), r_{2}(\bfT), \hdots, r_{\mathcal{O} ( \bar{N} \log(L \bar{N}) )}(\bfT)]^\mathsf{T}$, the overall objective can be compactly written as 
$\bfT = \argmin_{\bfT} \norm{\bfF(\bfT)}_\epsilon$. 
We minimise it with the Levenberg-Marquardt method \cite{Levenberg_44, Marquardt_1963} provided by the \textit{ceres} library \cite{ceres-solver}, with the rotations parametrised by axis-angles. 

\newcommand{\adjustident}[1]{\parbox[t]{\dimexpr\linewidth-\algorithmicindent}{#1}} 
\newcommand{\StateI}[1]{\State \parbox[t]{\dimexpr\linewidth-\algorithmicindent}{#1}} 
\captionsetup[algorithm]{font=normalsize} 

\begin{algorithm}[t]
\small 
\caption{MBGA: Simultaneous Point Set Alignment} 
\label{alg:MBGA} 
\begin{algorithmic}[1] 
\renewcommand{\algorithmicrequire}{\textbf{Input:}} 
\Require $L$ point sets $\bfSigma = \{\bfSigma_1, \bfSigma_2, \hdots, \bfSigma_L \}$ 
\renewcommand{\algorithmicensure}{\textbf{Output:}} 
\Ensure rigid transformations $\bfT_l = \{\bfR_l, \bft_l\}$ aligning all $\bfSigma_l$ into a common unknown reference frame $\mathscr{R}$ 
\State {\bf Initialisation: } $\bfR_l = \bfI$, $\bft_l = \bfzero$, unit masses (\textit{per default}) %
\While{not converged} $\;\;$($t$ is iteration index) 
\StateI{apply current $\bfT_l^t$ to the respective $\bfSigma_l$ (for all $\bfY_l$)} %
\StateI{build a joint $2^D$-tree on the union $g(\bfT_l^t, \bfSigma_l)$ (Sec.~\ref{ssec:point_clustring})} 
\State{\textbf{for} every $\bfSigma_l$  \textbf{do}} 
\StateI{minimise the GPE~\eqref{eq:GPE_tree}: $\bfT^{t+1} = \bfT^t \oplus \argmin_\bfT  \bfE_{\bfC}(\bfT)=$ \\  %
         $\sum_{l = 1}^{L} \,\, \sum_{i = 1}^{|\bfSigma_l|} \, \sum_{\bfc_j \in \bfC_{l,i}  } 	 %
        \Big( m_{\bfp_i}^l \, m_{\bfc_j} \norm{ g(\bfT_l, \bfp_i^l) - \bfc_j}_\epsilon \Big)$ \\
         with the Levenberg-Marquardt method \cite{Levenberg_44, Marquardt_1963, ceres-solver}
        }
\State{\textbf{end for}}
\EndWhile
\end{algorithmic}
\end{algorithm}

\vspace{-12pt}
\paragraph{Solution Initialisation. } 
We initialise $\bfR_l$ as identities and $\bft_l$ as zero vectors. 
Any other values can be used if available from an external process. %
Translation can be approximately resolved in the pre-processing step by bringing all 
point set centroids into coincidence (this is not compulsory). 
Points are initialised with unit masses if a point set is uniformly sampled. 
Otherwise, a variant of a mass normalisation technique or integrating additional alignment cues counterbalancing varying point densities can be applied (see Secs.~\ref{ssec:boundary_conditions}, \ref{ssec:shape_signature}). 
For the convergence reasons, we keep the number of Gauss-Newton solver iterations per every update of the $2^D$-tree low (one-two iterations). 
Otherwise, the gravitational force field will be severely deprecated while point sets are approaching each other. %
MBGA is summarised in Alg.~\ref{alg:MBGA}. 

\section{Experiments}\label{sec:experiments} 

\begin{figure*}[t!] 
\centering 
\includegraphics[width=1.0\linewidth]{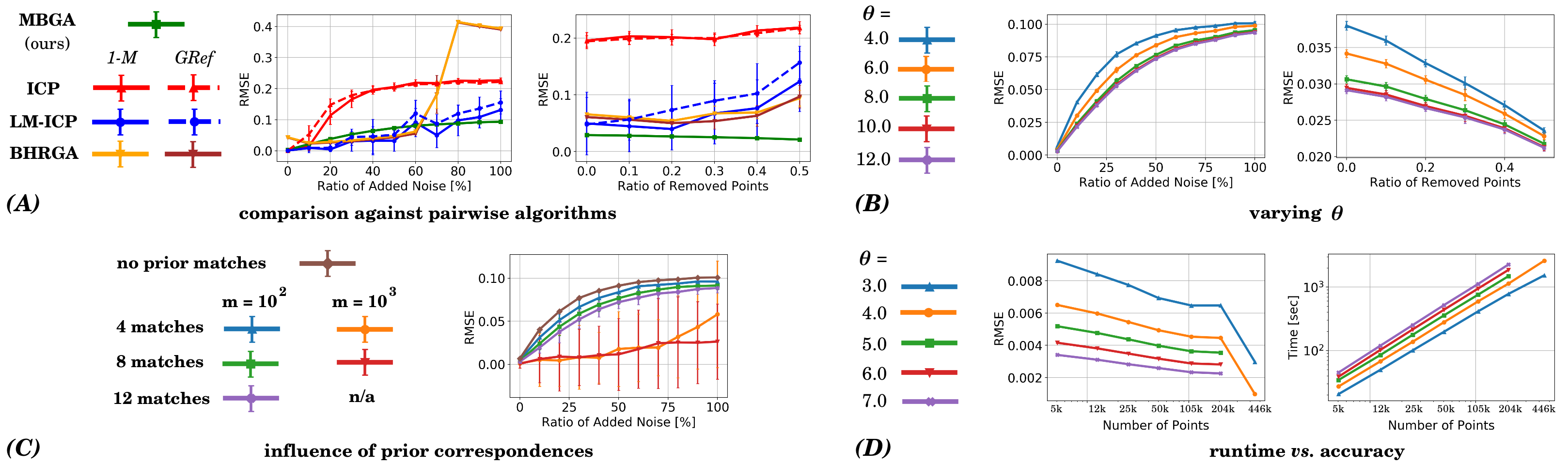}
\caption{ 
The summary of the quantitative results. 
\textbf{\textit{(A): }} Comparison of MBGA to the pairwise point set alignment methods ICP \cite{Besl_McKay_1992}, LM-ICP \cite{Fitzgibbon2003} and BHRGA \cite{BHRGA2019} executed in the modes \textit{one-to-many} (\textit{1-M}) and \textit{growing reference} (\textit{GRef}). 
\textbf{\textit{(B): }} The accuracy of MBGA with different $\theta$. 
\textbf{\textit{(C): }} The accuracy of MBGA with added prior correspondences. 
\textbf{\textit{(D): }} Runtime \textit{vs}~accuracy experiment, \textit{i.e.,} RMSE and the runtime as the functions of $\theta$ and point set cardinalities. 
} 
\label{fig:graphs} 
\end{figure*} 

This section summarises the experimental results. 
MBGA is implemented in C++ and tested on Intel \textit{i7-6700K} (quadcore) under the operating system Ubuntu 16.04. 

\subsection{Quantitative Comparisons}\label{ssec:quantitative_comparisons} 

First, we compare MBGA to the ICP \cite{Besl_McKay_1992} (with no acceleration of correspondence search), LM-ICP \cite{Fitzgibbon2003} and BHRGA \cite{BHRGA2019} approaches for the alignment of two point sets. 
We decimate one frame from the \textit{sleeping 2} sequence of the SINTEL dataset \cite{Butler2012} to $5045$ points and create two rotated and translated copies of it. 
The point sets are composed so that each of the methods can resolve the rotation in the noise-free setting (the maximum transformation angle amounts to $24^\circ$). %
The correspondences across the transformed point sets are known, and we compute the average 3D error $e_{3D}$ for $L$ point sets defined as 
\begin{equation}\label{eq:our_error} 
\small 
  e_{3D} = \begin{pmatrix} L \\ 2\end{pmatrix}^{-1} \sum_{\{i, j\} \in \Phi} \frac{ \norm{ g(\bfT_i, \bfSigma_i) - g(\bfT_{j}, \bfSigma_{j}) }_\mathcal{F}}{ \norm{ g(\bfT_i, \bfSigma_i)  }_\mathcal{F} }, 
\end{equation} 
with $\Phi$ denoting all combinations of two point sets out of $L$, and $\begin{pmatrix} L \\ 2\end{pmatrix} = |\Phi|$ is the total number of combinations. 
ICP, LM-ICP and BHRGA accept two point sets at a time, and we align three point sets either sequentially with the \textit{growing reference} (\textit{GRef}) strategy, or select one reference and align the remaining point sets to it with the \textit{one-to-many} (\textit{1-M}) policy. 
For the pairwise methods, we test all possible combinations of references and templates and report the average $e_{3D}$ and $\sigma$, \textit{i.e.,} the standard deviation of $e_{3D}$, over all runs to avoid the bias towards one of the point sets. 
\textit{1-M} and \textit{GRef} policies require six and twelve alignments for each point cloud triple, respectively. 
We evaluate all methods with the growing level of noise and decreasing degree of partial overlap between the input samples. %
In total, there are eleven different noise levels and six different degrees of partial overlaps (\textit{i.e.,} $17$ tests per method). %
We repeat each experiment for every noise level and degree of partial overlapping ten times. 
Consequently, the pairwise methods require $3060$ alignments each. %
MBGA performs $170$ alignments of point set triples since it does not have to alternate between the choice of the reference. 
\vspace{4pt} 
\noindent\textbf{Uniform Noise.} We add uniformly distributed noise to each point set triple in the bounding sphere ranging in the portion from $5\%$ to $100\%$ of the initial amount of points\footnote{$100\%$ of added noise means that the point set contains $50\%$ of outliers}. 
BHRGA \cite{BHRGA2019} is a state-of-the-art gravitational rigid point set alignment method at the moment of submission. 
Since BHRGA and our MBGA use a $2^D$-tree for the acceleration, we fix $\theta = 12.0$ for both methods in all experiments. 
BHRGA allows for multiple internal iterations per each external NLLS solver iteration, and we set this value to $10$, as recommended in \cite{BHRGA2019} for faster convergence. 

The results of the pairwise methods and our MBGA are shown in Fig.~\ref{fig:graphs}-(A), left. 
The accuracy of MBGA is only slightly affected by noise, and $e_{3D} < 0.1$. 
MBGA outperforms LM-ICP and BHRGA starting from ${\approx}65\%$ of added noise. 
For the lower noise levels, LM-ICP achieves smaller average $e_{3D}$ compared to ICP and MBGA, although it shows the highest $\sigma \leq 0.05$ among all methods. 
At the same time, our approach has higher accuracy compared to ICP already at the noise level of $20\%$. 
Moreover, its $\sigma$ is orders of magnitude lower compared to the competing methods. 
This implies that MBGA is highly robust to noise, \textit{i.e.,} the latter has virtually no effect on its performance. 
This result agrees with previous observations about the strength of the gravitational model \cite{Golyanik2016, BHRGA2019}. 
Additionally, treating all samples on par extra contributes to the accuracy because the final reference frame is obtained automatically. \vspace{4 pt} \\ 
\textbf{Missing Data.} 
Next, we systematically remove points from the same three input point sets and test the methods on their ability to handle missing data. 
The points are removed randomly from each point set, and then the intersection of points with valid correspondences across all point sets is taken to compute $e_{3D}$. 
The portion of the removed points from all three point sets varies in the range $[0.1; 0.5]$. 
On top of that, we add $40\%$ of uniform noise to the samples. 
The results are given in Fig.~\ref{fig:graphs}-(A), right. 
MBGA shows the smallest $e_{3D}$ and $\sigma$ in all experiments. 
Similarly to the case with uniform noise, the accuracy of MBGA is barely affected by the missing data ratio. 
LM-ICP is severely affected by missing data and shows the highest variation in the mean $e_{3D} \in \{0.05; 0.14\}$ among all methods, while BHRGA is the second most accurate method.  
\vspace{4 pt} \\ 
\textbf{Runtime \textit{vs}~Accuracy. } 
We repeat the experiments with noise and missing data with different $\theta$, see  Fig.~\ref{fig:graphs}-(B). 
As expected, larger $\theta$ results in a lower $e_{3D}$. %
At the same time, increasing $\theta$ implies a higher runtime. 
In Fig.\ref{fig:graphs}-(D), we report $e_{3D}$ and the corresponding runtime of MBGA for multiple subsampling factors and $\theta \in [3.0; 7.0]$. 
The dependency of the runtime from point set cardinality is very close to linear,  \textit{i.e.,} we see that with doubling the number of points, the runtime doubles likewise, and this pattern holds for all $\theta$. 
$e_{3D}$ consistently decreases with the increasing number of points in the point sets. 
This confirms that \textit{all points} guide the registration and are significant. 
\vspace{4 pt} \\ 
\textbf{Prior Correspondences.} 
We repeat the test with the uniform noise with prior correspondences.
In total, three modes with four, eight and twelve prior matches each are tested. %
Every prior correspondence in every repetition of the experiment for every noise level is drawn randomly with equal probability from the entire set of points with a valid correspondence (as one point index out of $5045$). 
For the statistical significance, correspondences are drawn randomly ten times for each of ten repetitions of the experiment for each noise level. 
We report $e_{3D}$ and $\sigma$ for each mode with two different masses (weights) of prior matches, \textit{i.e.,} $10^2$ and $10^3$. %
The results and summarised in Fig.~\ref{fig:graphs}-(C). %
With the increasing number of prior matches of the same weight, mean $e_{3D}$ decreases. 
On the other hand, we observe a rise of $\sigma$ with the higher weight. 
This suggests that the choice and distribution of prior correspondences affects $e_{3D}$ stronger with higher weights of prior matches (since prior matches are drawn randomly, they can build a suboptimal configuration when distributed unevenly or degenerately). %
For too large prior weights and a growing number of prior matches, the result becomes strongly influenced by choice of prior matches (which is the case with twelve prior matches and the prior weight of $10^3$). 
\vspace{4 pt} \\ 
\textbf{Comparison to JRMPC.} 
Finally, we align the clean triple of SINTEL frames with JRMPC  \cite{EvangelidisHoraud2018} and MBGA. 
\begin{figure*}[t!] 
\centering 
\includegraphics[width=1.0\linewidth]{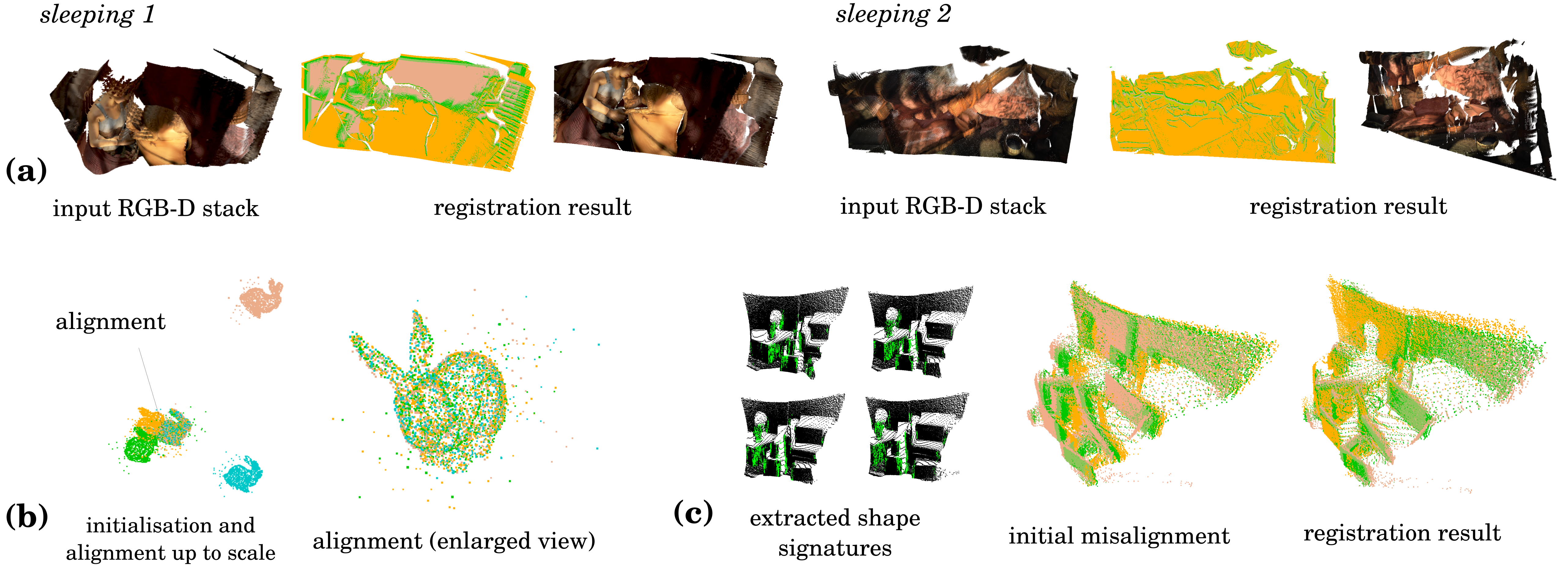}  %
\caption{(a): Qualitative results on the rigid \textit{SINTEL} \cite{Butler2012} scenes, \textit{i.e.,} \textit{sleeping 1} ($4$ frames, taken every $3$ frames) and \textit{sleeping 2} ($5$ frames, taken every other frame). 
Due to the camera movement, either frontal or lateral, the covered areas as well as their sampling densities differ, leading to clustered outliers. 
(b): This experiment with four bunnies \cite{EvangelidisHoraud2018} with different sampling rates and uniform noise demonstrates the robustness of our method to noise and the ability to recover translation without a pre-processing step. 
(c): Qualitative results on the time-of-flight data \cite{EvangelidisHoraud2018}. The shape signatures used for mass initialisation are shown on the left. 
The green colour stands for the emphasised regions, and black marks the shadowed regions with scarce geometric features. 
} 
\label{fig:SINTEL_JOINT} 
\end{figure*} 
\begin{table}[!t] 
\center 
\footnotesize 
    \begin{center} 
        \begin{tabular}{|p{2pt}c||c|c|c||c|c|} \hline 
        & & \makecell{ICP \\\cite{Besl_McKay_1992}} & \makecell{LM-ICP \\\cite{Fitzgibbon2003}} & \makecell{BHRGA\\\cite{BHRGA2019}$^{*}$} & \makecell{JRMPC \\\cite{EvangelidisHoraud2018}$^{*}$} &  \makecell{MBGA \\(ours)$^{*}$} \\ \hline \hline 
        \multirow{2}{*}{\textit{N}} & $e_{3D}$      & $0.2244$  & $0.1435$  &  $0.392$  & \pmb{$1.5E$-$4$} & \pmb{$9.4E$-$2$}      \\ \cline{3-7} 
                                    & $\sigma$      & $6.4E$-$3$ & $3.7E$-$2$ &  $1.4E$-$3$  & \pmb{$5.6E$-$5$} & \pmb{$1.1E$-$3$}  \\ \hline 
        \multirow{2}{*}{\textit{R}} & $e_{3D}$      & $0.2181$  & $0.1403$ & \pmb{$9.6E$-$2$} & \pmb{$1.3E$-$3$} & \pmb{$2.1E$-$2$}       \\ \cline{3-7} 
                                    & $\sigma$      & $8E$-$3$ & $4.1E$-$2$ & \pmb{$1.7E$-$2$} & \pmb{$3.6E$-$4$} & \pmb{$4.8E$-$4$}      \\ \hline 
        \end{tabular} 
    \end{center} 
    \caption{Comparison of tested methods on the data with the highest noise ($100\%$ of added noise, denoted by \textit{"N"}) and missing data ($50\%$ of randomly removed points, denoted by \textit{"R"}) ratios. 
    For the pairwise methods \cite{Besl_McKay_1992, Fitzgibbon2003, BHRGA2019}, the metrics are averaged over the \textit{1-M} and \textit{GRef} policies. "$^{*}$" marks methods achieving $e_{3D} < 0.1$ at least once (highlighted in bold). } 
    \label{table:comparisons} 
\end{table} 
This experiment tests the registration accuracy on clean data. %
For JRMPC, $e_{3D} = 1.58 \cdot 10^{-5}$, and the $e_{3D}$ of MBGA amounts to $0.01$ with $\theta = 8.0$. 
It is known that methods of the gravitational class are not designed for clean data \cite{Golyanik2016}. 
MBGA can repeat the same experiment with the non-subsampled point sets with $464k$ points each, whereas JRMPC,  version~$0.9.4$~\cite{jrmpc-code} (in \textit{Matlab} \cite{MATLAB2016}) cannot launch due to the memory issues and high computational complexity. 

Next, we repeat the experiments with high levels of noise and missing data, %
see 
Table~\ref{table:comparisons}. 
Only BHRGA \cite{BHRGA2019} (missing data), JRMPC \cite{EvangelidisHoraud2018} and MBGA accurately resolve the transformations. 
Even though the proposed MBGA is not the most accurate method compared to probabilistic approaches such as JRMPC, it is the only approach which can execute on large point sets. 
Due to the approximation of potentials with a $2^D$-tree, MBGA achieves a slightly higher $e_{3D}$. 

\subsection{Qualitative Examples} 

Fig.~\ref{fig:SINTEL_JOINT}-(a) shows two registration examples of coloured point cloud batches from the \textit{SINTEL} dataset \cite{Butler2012}. 
We assign masses based on point intensities with a linear mapping. 
MBGA accurately registers batches of four and five RGB-D images. 
Fig.~\ref{fig:SINTEL_JOINT}-(b) illustrates the further experiment with four \textit{Stanford bunnies} \cite{Stanford3D} with added uniform noise followed \cite{EvangelidisHoraud2018}. %
The fusion result for time-of-flight data from \cite{EvangelidisHoraud2018} (four views) is visualised in Fig.~\ref{fig:SINTEL_JOINT}-(c). %
In this experiment, we use the proposed in Sec.~\ref{ssec:shape_signature} shape signature (the emphasised areas are shown on the left). 
The region of interest with a head statue and surfaces are aligned, and the gravitational influence of the wall (a massive clustered outlier with barely any geometric cue) is suppressed. %
\section{Conclusion}\label{sec:conclusion} 

We introduced the first approach for groupwise point set alignment with gravitational particle dynamics which 
alternates between force inducing union of point sets and the transformed point set. 
To the best of our knowledge, it is for the first time that a multiset alignment algorithm operates in the globally multiply-linked manner across all points in all samples. 
At the same time, the quasi-linear time is guaranteed thanks to a $2^D$-tree which is built --- due to the proposed mass shadowing policy --- jointly on all point sets. 
As a result, we can align $5 \cdot 10^5$ points in several minutes. 

MBGA outperforms several widely-used and recent pairwise methods, including BHRGA, when aligning several point clouds with large amounts of noise and missing data. 
Since all inputs are handled on par, our approach is alignment-path-invariant and has a loop-closure effect. 
The experiments have also shown that the number of points matters for the alignment accuracy. 
In the absence of boundary conditions, scenarios with partial overlaps with a predominant common central part work well. 
We further proposed a new shape signature for partially-overlapping data. 
The GPE landscape can be studied analytically in future work to categorise all cases which MBGA can accurately resolve. 
\noindent\textbf{Acknowledgement.} 
This work was supported by the ERC Consolidator Grant 4DReply (770784). 

{\small
\bibliographystyle{ieee}
\bibliography{egbib}
}

\end{document}